 \newtheorem{remark}{Remark}
 \newtheorem{theorem}[remark]{Theorem}
\title{MOCICE-BCubed F$_1$: A New Evaluation Measure for Biclustering Algorithms\footnote{This manuscript has been submitted to \emph{Pattern Recognition Letters}.}}
\author{Henry Rosales-M\'endez$^1$, Yunior Ram\'irez-Cruz$^2$\\{\small $^1$ Computer Science Department,}\\{\small Universidad de Oriente, Patricio Lumumba s/n, Santiago de Cuba, 90500, Cuba.}\\{\small (\emph{Currently with ICTexcelsus Ltd., Av. Rep. de El Salvador N36-161, Quito, 170505, Ecuador})}\\{\small $^2$ Departament d'Enginyeria Inform\`atica i Matem\`atiques,}\\
{\small Universitat Rovira i Virgili,}  {\small Av. Pa\"{\i}sos
Catalans 26, Tarragona, 43007, Spain.} \\{\small hrosmendez\@@gmail.com, yunior.ramirez\@@urv.cat}}
\begin{document}
\maketitle
\begin{abstract}
The validation of biclustering algorithms remains a challenging task, even though a number of measures have been proposed for evaluating the quality of these algorithms. Although no criterion is universally accepted as the overall best, a number of meta-evaluation conditions to be satisfied by biclustering algorithms have been enunciated. In this work, we present MOCICE-BCubed F$_1$, a new external measure for evaluating biclusterings, in the scenario where gold standard annotations are available for both the object clusters and the associated feature subspaces. Our proposal relies on the so-called micro-objects transformation and satisfies the most comprehensive set of meta-evaluation conditions so far enunciated for biclusterings. Additionally, the proposed measure adequately handles the occurrence of overlapping in both the object and feature spaces. Moreover, when used for evaluating traditional clusterings, which are viewed as a particular case of biclustering, the proposed measure also satisfies the most comprehensive set of meta-evaluation conditions so far enunciated for this task.

\vspace{0.3cm}
\noindent
\textbf{Keywords:} Clustering algorithm evaluation, External measures, Biclustering.
\end{abstract}

\section{Introduction}
\label{sec:intro}

The aim of clustering algorithms (also referred to as \emph{unsupervised classification algorithms}) is to structure a collection of objects into a set of groups, or \emph{clusters}, aiming to place dissimilar objects in different clusters, and similar objects in the same cluster. The solutions to a large number of real world problems may be modelled by clusterings. Also, clustering is commonly used as an auxiliary task in many fields, e.g. wireless sensor networks \cite{wireless}, speech recognition \cite{speech}, stochastic optimization \cite{stochastic_opt}, data compression \cite{datacompression}, document organization, etc.

In traditional clustering, the feature  space on which objects are represented is determined off-line, and the representation of every object for performing the clustering method is determined on this feature space. In the last decades, a generalization of the traditional clustering task, called \emph{biclustering}\footnote{A wide variety of terms have been used to refer to this task. While it is called \emph{biclustering} in \cite{Turner05}, it is also referred to as \emph{co-clustering} \cite{Cho04}, \emph{subspace clustering} \cite{patrikainen06}, \emph{projection / projected / projective clustering} \cite{Aggarwal00},~etc.}, has emerged. The underlying idea of biclustering is to collaboratively find adequate subspaces of features, in terms of which meaningful, high quality clusters may be discovered. A variety of biclustering algorithms have been proposed. We can find surveys about this topic in \cite{kriegel09,parsons04,muller09,moise09,yip03}. Biclustering has been applied in a wide range of problems, such as image segmentation\cite{yang08}, face clustering\cite{ho03}, image compression\cite{hong06}, genome expression data \cite{cha14}, etc.

Cluster validation is the field of study dealing with the methodologies aiming to asses the quality of the results of a clustering algorithm, which we refer to as \emph{candidate clustering}. The quality of a candidate clustering is assessed via one or several evaluation measures, which are expected to yield optimum scores for high quality candidate clusterings and far-from-optimum scores for poor candidate clusterings, as well as comparable scores for two or several comparable candidate clusterings. Validation criteria are divided into internal, external or relative. Relative validation measures choose the best results of multiple runs of a clustering algorithm with different parameters, whether these results have been obtained by means of an internal or external measure. Internal validation measures assess the quality of a candidate clustering by analyzing exclusively the group structure and/or the object-to-object, object-to-cluster and cluster-to-cluster relations observed in it, whereas external validation measures compare the candidate clustering to an ideal clustering, also called \emph{gold standard}. The gold standard is assumed to describe the correct clustering, i.e. the one that best fits the real world structure of the collection, and is usually the result of a manual annotation process conducted by one, or (desirably) several, human specialists. In the context of external evaluation measures, it is common to use the term \emph{cluster} only to refer to the clusters in the candidate clustering, whereas the clusters in the gold standard are called \emph{classes}, \emph{categories}, or \emph{hidden clusters}. For uniformity, throughout this work we will use the term \emph{classes} for referring to the clusters of the gold standard. 

The large number of evaluation measures proposed has brought up the need of developing \emph{meta-evaluation} criteria, which intend to assess the suitability of a given evaluation measure, or to compare two measures. Usually, these criteria are expressed as sets of conditions to be satisfied by ``good" evaluation measures. No set of conditions enjoys universal acceptation. Here, when treating measures for traditional clustering, we use the set of four conditions proposed by Amig\'o et al. in \cite{Amigo} for traditional clustering, along with an additional condition proposed by the authors of this work in \cite{cice,IDApaper} for the overlapping clustering scenario, as the basis for meta-evaluation. We do so because the conditions proposed in \cite{Amigo} were shown to subsume the previously existing conditions. For an analogous reason, when treating measures for biclustering, we additionally use the set of conditions proposed by Patrikainen and Meila in \cite{patrikainen06}.

Several studies have been conducted on external cluster validation in traditional clustering \cite{Amigo,Bagga,cice,Meila,Dom,Rosenberg,ramirez,goldberg,amigo2,Halkidi}. Although measures defined for this purpose may be used to partially evaluate biclusterings from the object space perspective, they are unable to take into account the quality of the feature space clustering. According to Patrikainen and Meila \cite{{patrikainen06}}, three different approaches have been followed in biclustering validation. On one hand, a number of authors have evaluated biclusterings from the object space perspective only, overlooking information about the feature space \cite{yip03,Aggarwal00,procopiuc02,Domeniconi04,Prelic06}. On the other hand, other authors only take into account the feature subspace perspective \cite{Patrikainen04}. Finally, a third approach consists on evaluating the quality from each perspective separately and merging the partial scores into one final score \cite{Cho04}. In every case, a measure that only takes into account the object (feature) space yields the same value for any biclustering whose object (feature) clusters are fixed, regardless the clustering on the feature (object) space. To overcome this problem, new measures have been proposed which deal with both perspectives in a joint manner \cite{gunnemann11}. 

Traditional clustering may be viewed as a particular case of biclustering, where a fixed feature subspace is associated to every clustering. In light of this consideration, it is reasonable to expect that biclustering evaluation measures, when applied in this scenario, satisfy traditional clustering meta-evaluation conditions. However, as we will show later, that is not always the case. Motivated by this problem, in this paper we present a new measure for biclustering evaluation, MOCICE-BCubed F$_1$, which builds on the measure CICE-BCubed F$_1$, known to satisfy the most comprehensive set of meta-evaluation conditions on traditional clustering. The new measure correctly adapts to the biclustering scenario by applying the so-called micro-objects transformation, and it satisfies the most comprehensive set of biclustering meta-evaluation conditions, while also inheriting the compliance to all traditional clustering meta-evaluation conditions.

The remainder of this paper is organized as follows. In Section~\ref{sec:related_work}, we briefly review previous work in biclustering algorithm evaluation, focusing on the most comprehensive set of meta-evaluation conditions for traditional clustering and biclustering, existing micro-object-based external evaluation measures and the fact that these measures fail to satisfy several meta-evaluation conditions when used for evaluating traditional clusterings. In Section~\ref{sec:ourProposal}, we describe the new proposed measure and prove its compliance to meta-evaluation conditions. Finally, we present our conclusions in Section~\ref{sec:conclusion}.

\section{Background and previous work}
\label{sec:related_work}

Given the pair $(O,F)$, where $O=\{o_1,o_2,\ldots,o_n\}$ is usually viewed as a set of objects and $F=\{f_1,f_2,\ldots,f_m\}$ is usually viewed as a set of features, a traditional clustering of $(O,F)$ is a set ${\cal G}=\{G_1,G_2,\ldots,G_t\}$, where $G_i \subseteq O$ for every $i \in \{1,\ldots,t\}$, whereas a biclustering of $(O,F)$ is a set $\ddot{\cal G}= \{\ddot{G}_1, \ddot{G}_2, \ldots, \ddot{G}_t\}$, where $\ddot{G}_i=(\bar{G}_i,\mathring{G}_i)$, $\bar{G}_i \subseteq O$ and $\mathring{G}_i \subseteq F$ for every $i \in \{1,\ldots,t\}$. Traditional clusterings may be considered as a particular case of biclusterings, where $\mathring{G}_i=\mathring{G}_j$ for every $i,j \in \{1,\ldots,t\}$. In particular, we can make $\mathring{G}_i=F$ for every $i \in \{1,\ldots,t\}$.

A biclustering $\ddot{\cal G}$ needs not satisfy $\displaystyle{\cup_{\ddot{G} \in \ddot{\cal G}}}\bar{G} = O$ nor $\displaystyle{\cup_{\ddot{G} \in \ddot{\cal G}}}\mathring{G} = F$. Moreover, for two biclusters $\ddot{G},\ddot{G}' \in \ddot{\cal G}$, the conditions $\bar{G} \cap \bar{G}'=\emptyset$ and $\mathring{G} \cap \mathring{G}'=\emptyset$ are not enforced neither, \emph{i.e.} overlapping is allowed on both the object space and the feature space.

Formally, an evaluation measure for traditional clusterings is a function of the form $$f: \rho(\rho(O)) \times \rho(\rho(O)) \longrightarrow \mathbb{R},$$ where $\rho(O)$ is the power set of $O$. Such a function takes a candidate clustering and a gold standard as arguments, and yields a score that indicates how good the candidate clustering is according to the gold standard. Higher scores are commonly interpreted as better, \emph{i.e.} the measure is assumed to assess the similarity between the candidate clustering and the gold standard, but that is not a mandatory behaviour, as a measure may also assess the dissimilarity between the candidate clustering and the gold standard. In an analogous manner, an evaluation measure for biclusterings is a function of the form $$f: \rho(\rho(O) \times \rho(F)) \times \rho(\rho(O) \times \rho(F)) \longrightarrow \mathbb{R}.$$

Several authors have proposed sets of meta-evaluation conditions for traditional clusterings \cite{Meila,Dom,Rosenberg}. A set of four conditions is proposed in \cite{Amigo} which subsumes those previously existing. An additional condition was proposed in \cite{cice,IDApaper} to account for special situations arising in overlapping clusterings. These conditions are enunciated as follows:

\begin{enumerate}[{A}.1-]
\item \textit{Homogeneity} \cite{Amigo}: Let ${\cal C}$ be a gold standard and let ${\cal G}_1$ be a clustering where one cluster $G_k$ contains objects belonging to two classes $C_i,C_j \in {\cal C}$. Let ${\cal G}_2$ be a clustering identical to ${\cal G}_1$, except for the fact that instead of the cluster $G_k$, it contains two clusters $G'_{k_1}$ and $G'_{k_2}$, one of them containing only objects belonging to $C_i$ and the other containing only objects belonging to $C_j$. An evaluation measure that satisfies the homogeneity condition should score ${\cal G}_1$ worse than ${\cal G}_2$.
\item \textit{Completeness} \cite{Amigo}:  Let ${\cal C}$ be a gold standard and let ${\cal G}_1$ be a clustering where two clusters $G_1$ and $G_2$ contain only objects belonging to one class $C_k \in {\cal C}$. Let ${\cal G}_2$ be a clustering identical to ${\cal G}_1$, except for the fact that instead of the clusters $G_1$ and $G_2$, it contains the cluster $G_{1,2}=G_1 \cup G_2$. An evaluation measure that satisfies the completeness condition should score ${\cal G}_1$ worse than ${\cal G}_2$.
\item \textit{Rag Bag} \cite{Amigo}: Let ${\cal C}$ be a gold standard. Let ${\cal G}_1$ be a clustering where one cluster $G_{clean}$ contains $n$ objects belonging to one class $C_i \in {\cal C}$ plus one object belonging to a different class $C_j \in {\cal C}-\{C_i\}$ and one cluster $G_{noise}$ contains $n$ objects belonging to $n$ different classes. Let ${\cal G}_2$ be a clustering identical to ${\cal G}_1$, except for the fact that the object in $G_{clean}$ that does not belong to the same class as all other objects is placed instead in $G_{noise}$. An evaluation measure that satisfies the rag bag condition should score ${\cal G}_1$ worse than ${\cal G}_2$.
\item \textit{Clusters size versus quantity} \cite{Amigo}: Let ${\cal C}$ be a gold standard. Let ${\cal G}$ be a clustering where one cluster $G_{large}$ contains $r+1$ objects belonging to one class $C_i \in {\cal C}$ and $r$ clusters $G_1$, $G_2$, \ldots , $G_r$, contain each one two objects belonging to the same class. Let ${\cal G}_1$ be a clustering identical to ${\cal G}$, except for the fact that instead of the two-object clusters $G_1$, $G_2$, \ldots , $G_r$, it contains $2r$ singleton clusters containing the corresponding objects. Let ${\cal G}_2$ be a clustering identical to ${\cal G}$, except for the fact that instead of the cluster $G_{large}$, it contains one cluster of size $r$ and one cluster of size 1. An evaluation measure that satisfies the clusters size versus quantity condition should score ${\cal G}_1$ worse than ${\cal G}_2$.
\item \textit{Perfect match} \cite{cice,IDApaper}: An evaluation measure must yield the optimum score for a candidate clustering if and only if it is identical to the gold standard.
\end{enumerate}

Regarding biclustering algorithms, a set of five conditions are presented in \cite{patrikainen06} which describe what is considered as a good behavior, so good evaluation measures are expected to reward algorithms that exhibit such behavior. These conditions are enunciated as follows:

\begin{enumerate}[{B}.1-]
\item \textit{Penalty for non-intersection area}: Let $\ddot{\cal G}$ be a biclustering and let $\ddot{\cal G}'$ be a biclustering identical to $\ddot{\cal G}$, except for the fact that one (or more) non-clustered object $o \in O-\left(\cup_{\ddot{G} \in \ddot{\cal G}}\bar{G} \displaystyle\bigcup \cup_{\ddot{C} \in \ddot{\cal C}}\bar{C}\right)$, is added to $\ddot{\cal G}'$, either as a new singleton cluster, or as part of an existing cluster. An evaluation measure that satisfies the penalty for non-intersection area condition should score $\ddot{\cal G}'$ worse than $\ddot{\cal G}$. 

\item \textit{Background independence}: The score yielded for a pair of biclusterings must not depend on non-clustered objects. Let $\ddot{\cal C}_X$ and $\ddot{\cal G}_X$ be a gold standard and a candidate biclustering, respectively, on a collection $X$. Let $X'$ be a collection such that $X \subset X'$ and let $\ddot{\cal C}_{X'}$ and $\ddot{\cal G}_{X'}$ represent $\ddot{\cal C}_{X}$ and $\ddot{\cal G}_{X}$ on $X'$. An evaluation measure that satisfies the background independence condition should yield the same score for $(\ddot{\cal G}_X,\ddot{\cal C}_X)$ and $(\ddot{\cal G}_{X'},\ddot{\cal C}_{X'})$.

\item \textit{Scale invariance}: For a positive integer $k$ and a biclustering $\ddot{\cal G}$, a $k$-scaled biclustering $\ddot{\cal G}'$ of $\ddot{\cal G}$ is a biclustering where, for every $\ddot{G}' \in \ddot{\cal G}'$, $\bar{G}'$ is the disjoint union of $k$ copies of $\bar{G}$ and $\mathring{G}'$ is the disjoint union of $k$ copies of $\mathring{G}$. Let $\ddot{\cal G}$ and $\ddot{\cal C}$ be a candidate biclustering and a gold standard, respectively, and let $\ddot{\cal G}'$ and $\ddot{\cal C}'$ be $k$-scaled biclusterings of $\ddot{\cal G}$ and $\ddot{\cal C}$. An evaluation measure that satisfies the scale invariance condition should yield the same score for $(\ddot{\cal G}',\ddot{\cal C}')$ and $(\ddot{\cal G},\ddot{\cal C})$. 

\item \textit{Copy invariance}: Let $\ddot{\cal C}$ be a gold standard and let $\ddot{\cal G}$ be a candidate biclustering. For a positive integer $k$, let $\ddot{\cal G}'$ be the disjoint union of $k$ copies of $\ddot{\cal G}$, and let $\ddot{\cal C}'$ be the disjoint union of $k$ copies of $\ddot{\cal C}$. An evaluation measure that satisfies the copy invariance condition should yield the same score for $(\ddot{\cal G}',\ddot{\cal C}')$ and $(\ddot{\cal G},\ddot{\cal C})$.

\item \textit{Multiple cluster coverage penalty}: Let $\ddot{\cal G}=\{\ddot{G}\}$ be a singleton biclustering. Let $\ddot{\cal C}=\{\ddot{C}_1,\ddot{C}_2,\ldots,\ddot{C}_t\}$ be a gold standard  such that $\bar{G}=\cup_{\ddot{C} \in \ddot{\cal C}}\bar{C}$ and $\mathring{G}=\cup_{\ddot{C} \in \ddot{\cal C}}\mathring{C}$. An evaluation measure that satisfies the multiple cluster coverage penalty condition should not yield the optimum score for $(\ddot{\cal G},\ddot{\cal C})$.
\end{enumerate}

We now discuss several external evaluation strategies proposed for the  biclustering scenario. We describe the micro-objects transformation and the measures that have been adapted to this approach, namely CE, RNIA, Rand's index, VI and E4SC.

Patrikainen and Meila \cite{patrikainen06} propose to transform the candidate biclustering and the gold standard into traditional clusterings in order to apply existing evaluation measures for the latter. They do so by transforming $(O,F)$ into the new space $(O \times F,\emptyset)$, where $O \times F$ is composed by pairs of the form $(o,f)$, $o \in O$, $f \in F$, which they call \emph{micro-objects}. Thus, a biclustering $\ddot{\cal G}=\{(\bar{G}_1,\mathring{G}_1),(\bar{G}_2,\mathring{G}_2),\ldots,(\bar{G}_t,\mathring{G}_t)\}$ is transformed into a clustering $\widetilde{\cal G}=\{\widetilde{G}_1,\widetilde{G}_2,\ldots,\widetilde{G}_t\}$ where $\widetilde{G}_i=\bar{G}_i \times \mathring{G}_i$ for every $i \in \{1,\ldots,t\}$.

Patrikainen and Meila introduce their proposals on non-overlapping biclusterings, which they propose to evaluate by applying the micro-objects transformation in combination with the measures \emph{Clustering Error} (\emph{CE}), \emph{Relative Non-intersecting Area} (\emph{RNIA}), \emph{Rand's index} and \emph{Variation of information} (\emph{VI}). CE determines the best matching between the candidate clustering and the gold standard, and computes the total number of objects shared by every class-cluster pair, according to this matching, which is denoted by $D_{max}$. CE is  defined as

\begin{equation}
\label{eq:CE}
CE({\cal G},{\cal C})=\frac{|U|-D_{max}}{|U|}
\end{equation}

\noindent
where $U=\left(\displaystyle{\cup_{G \in {\cal G}}}G\right) \bigcup \left( \displaystyle{\cup_{C \in {\cal C}}}C\right)$. Now, let $I = \left(\displaystyle{\cup_{G \in {\cal G}}}G\right) \bigcap \left(\displaystyle{\cup_{C \in {\cal C}}}C\right)$. RNIA is defined as 

\begin{equation}
\label{eq:RNIA}
RNIA({\cal G},{\cal C})=\frac{|U|-|I|}{|U|}
\end{equation}

The traditional Rand's index assumes the candidate clustering and the gold standard to be partitions of the object universe and is defined as

\begin{equation}
Rand({\cal G},{\cal C})=\frac{N_{00}+N_{11}}{N}\label{RandInd}
\end{equation}

\noindent
where $N_{01}$ is the number of object pairs that co-occur in a cluster of ${\cal G}$ and co-occur in a class of ${\cal C}$, $N_{00}$ is the number of object pairs that do not co-occur in a cluster of ${\cal G}$ and do not co-occur in a class of ${\cal C}$, and $N$ is the total number of object pairs. Patrikainen and Meila count $N$ on the universe $U=\left(\cup_{G \in {\cal G}}G\right) \cup \left(\cup_{C \in {\cal C}}C\right)$ and, to make the candidate clustering and the gold standard be partitions of $U$, they add as many singleton clusters as necessary.

Finally, VI is based on information theory and assesses the amount of information gained and lost when transforming the candidate clustering into the gold standard, as follows:

\begin{equation}
VI({\cal G},{\cal C})=\frac{1}{|U|}\displaystyle\sum_{i=1}^{t_1}\sum_{j=1}^{t_2}|G_i \cap C_j|\log\frac{|G_i|\cdot|C_j|}{|G_i \cap C_j|^2}\label{VI}
\end{equation}

\noindent
where $t_1=|{\cal G}|$ and $t_2=|{\cal C}|$. In a manner analogous as for Rand's index, they transform the candidate clustering and the gold standard into partitions of $U$ by adding as many singleton clusters as necessary.

Also following this approach, G\"unnemann et al. \cite{gunnemann11} propose to combine the micro-objects transformation and a variant of the F$_1$ measure called E4SC. They compute the macro-averaged F$_1$ of the candidate clustering with respect to the gold standard, and \emph{vice versa}, and compute the F$_1$ measure of both scores, as shown in Eq.~\ref{eq:E4SC}.

\begin{equation}
\label{eq:pr_rc}
P(A,B) = R(B,A) = \frac{|A \cap B|}{|A|}
\end{equation}

\begin{equation}
\label{eq:f1}
F_1(G,C) = \frac{2\cdot~P(G,C)\cdot~R(G,C)}{P(G,C)+R(G,C)}
\end{equation}

\begin{equation}
\label{eq:macrof1}
macroF_{1}({\cal G},{\cal C}) = \frac{1}{|{\cal G}|}\sum_{G \in {\cal G}}{\max_{C \in {\cal C}}{\{F_1(G,C)\}}}
\end{equation}

\begin{equation}
\label{eq:E4SC}
E4SC({\cal G},{\cal C}) = \frac{2\cdot~macroF_{1}({\cal G},{\cal C})\cdot macroF_{1}({\cal C},{\cal G}) }{macroF_{1}({\cal G},{\cal C})+macroF_{1}({\cal C},{\cal G})}
\end{equation}

Since the micro-objects transformation turns the evaluation of biclusterings into that of traditional clusterings, it is reasonable to expect that the measures applied on the micro-objects universe satisfy a wide range of meta-evaluation conditions for traditional clustering. However, the measures discussed so far fail to satisfy some of these conditions. Amig\'o et al. \cite{Amigo} show that Rand's index and VI do not satisfy the Rag Bag (A.3) condition, whereas Rand's index additionally fails to satisfy the Clusters size versus quantity (A.4) condition. Moreover, as we will show, CE, RNIA and E4SC also fail to satisfy some of these conditions. Consider the candidate biclusterings $\ddot{\cal G}_1=\{(\{1\},X),(\{2\},X),(\{3,4,5\},X),(\{7,8,9\},X),(\{6\},X)\}$ and $\ddot{\cal G}_2=\{(\{1,2\},X),(\{3,4,5\},X),(\{7,8,9\},X),(\{6\},X)\}$, as well as the gold standard $\ddot{\cal C}=\{(\{1,\ldots,6\},X),(\{7,8\},X),(\{9\},X)\}$, where $X=\{1',2',3'\}$. These are examples designed to test the compliance of the evaluation measures to the Homogeneity (A.1) condition. Likewise, in order to test the compliance of the evaluation measures to the Rag Bag (A.3) condition, consider the candidate biclusterings $\ddot{\cal G}_1=\{(\{1,\ldots,4\},X),(\{5,\ldots,9\},X)\}$ and $\ddot{\cal G}_2=\{(\{1,\ldots,5\},X),(\{6,\ldots,9\},X)\}$, as well as the gold standard $\ddot{\cal C}=\{(\{1\},X),(\{2\},X),$ $(\{3\},X),(\{4\},X),(\{5\},X),(\{6,\ldots,9\},X)\}$.

For the sake of uniformity in our presentation, in all cases we consider that a biclustering $\ddot{\cal G}_1$ being scored worse than a biclustering $\ddot{\cal G}_2$ by a measure $f$ means that $f(\ddot{\cal G}_1, \ddot{\cal C})<f(\ddot{\cal G}_2, \ddot{\cal C})$, \emph{i.e.} we view the scores yielded by evaluation measures as similarity values. Since CE and RNIA are defined as dissimilarities in the range $[0,1]$, in both cases we transform the scores into similarity values by making $f_{sim}(\ddot{\cal G},\ddot{\cal C})=1-f_{dissim}(\ddot{\cal G},\ddot{\cal C})$.

Table~\ref{tab:comparisonTraditionalCond} shows the scores yielded by CE, RNIA and E4SC for the pairs of traditional clusterings obtained by applying the micro-objects transformation to the biclusterings defined above. Every pair of values represents the scores yielded for $\ddot{\cal G}_1$ and $\ddot{\cal G}_2$, in that order. The emphasized cells highlight cases where the corresponding condition was not satisfied by the corresponding evaluation measure, as $\ddot{\cal G}_1$ was scored equally or better than $\ddot{\cal G}_2$.

\begin{table}[!cht]
\centering
\begin{tabular}{c|c|c}
&Homogeneity
&Rag Bag
\\
\hline
CE&\textbf{0.556~~0.556}&
\textbf{0.556~~0.556}\\ 
\hline
RNIA&\textbf{1.000~~1.000}&
\textbf{1.000~~1.000}\\ 
\hline
E4SC&0.544~~0.606&
\textbf{0.543~~0.533}\\ 
\hline
\end{tabular}
\caption{Compliance, or lack thereof, to conditions A.1 and A.3 by the measures CE, RNIA and E4SC.}
\label{tab:comparisonTraditionalCond}
\end{table}

\section{Our proposal}
\label{sec:ourProposal}

The basis of our new proposal is the measure CICE-BCubed F$_1$, which was presented in \cite{cice,IDApaper} for traditional clusterings, with an emphasis on adequately handling the occurrence of overlapping clusters. There, it was shown that CICE-BCubed F$_1$ satisfies conditions A.1 to A.5. Building on that proposal, our new measure is designed to keep the elements of CICE-BCubed F$_1$ that make it satisfy these conditions, while adapting it to the biclustering scenario in such a way that conditions B.1 to B.5 are also satisfied. We do so by adapting CICE-BCubed F$_1$ to the micro-objects transformation. 

CICE-BCubed F$_1$ is based on BCubed F$_1$ \cite{Bagga}. They both redefine the traditional Information Retrieval measures \emph{Precision} and \emph{Recall}, whose values are combined into F$_1$. The redefinitions introduced by CICE-BCubed and BCubed relate to traditional Precision and Recall in the sense that that they assess the likelihood of decisions made by the clustering algorithm to be correct and the likelihood of known correct decisions to be made by the algorithm, respectively. They differ in the nature of what is considered as a decision. While the original measures treat the action of placing an object in a cluster as a decision, BCubed and CICE-BCubed variants view a decision as the action of making two objects co-occur in a cluster. CICE-BCubed Precision and Recall differ from their BCubed counterparts in the fact that they add an extra term that prevents clusterings that are not identical to the gold standard from being given the maximum score.

In order to evaluate a candidate clustering ${\cal G}=\{G_1,G_2,\ldots,G_{t_1}\}$ with respect to the gold standard ${\cal C}=\{C_1,C_2,\ldots,C_{t_2}\}$, CICE-BCubed Precision computes a score for every pair of objects $o,o' \in O$ as follows

\begin{equation}
\label{eq:varsigma}
\varsigma(o,o')=\frac{\min(|{\cal G}(o)\cap {\cal G}(o')|,|{\cal C}(o)\cap {\cal C}(o')|) \cdot \Phi(o,o')}{|{\cal G}(o)\cap {\cal G}(o')|}
\end{equation}

\noindent
where ${\cal G}(o)=\{G \in {\cal G}:\ o \in G\}$, ${\cal C}(o)=\{C \in {\cal C}:\ o \in C\}$, and the function $\Phi(o,o')$, called \emph{Cluster Identity Index} (CII), averages the degrees of similarity between every candidate cluster containing $o$ and $o'$ and its most similar class, measured through their Jaccard's coefficient, and is defined as

\begin{equation}
\Phi(o,o')=\frac{1}{|{\cal G}(o,o')|} \sum_{G\in {\cal G}(o,o')} {\max_{C \in {\cal C}(o,o')} \left\{\frac{|G \cap C|}{|G \cup C|}\right\}}
\label{eq:CII}
\end{equation}

\noindent
where ${\cal G}(o,o')=\{G \in {\cal G}:\ o \in G \textrm{ and } o' \in G\}$ and ${\cal C}(o,o')=\{C \in {\cal C}:\ o~\in~C \textrm{ and } o' \in C\}$. All pairwise scores are combined into CICE-BCubed Precision as 

\begin{equation}
\label{eq:PrecisionCICE}
Precision({\cal G},{\cal C})=\frac{1}{|O|}\sum_{o\in O}{\frac{1}{| \bigcup_{G\in {\cal G}(o)}{G}|}}\sum_{o'\in E_{\cal G}(o)}{\varsigma(o,o')}
\end{equation}

\noindent
where $E_{\cal G}(o)=\{o':\ o \in G \textrm{ and } o' \in G \textrm{ for some }G \in{\cal G}\}$. Following an analogous strategy, CICE-BCubed Recall computes, for every pair of objects $o,o' \in O$, the score 

\begin{equation}
\label{eq:tau}
\tau(o,o')=\frac{\min(|{\cal G}(o)\cap {\cal G}(o')|,|{\cal C}(o)\cap {\cal C}(o')|) \cdot \Phi(o,o')}{|{\cal C}(o)\cap {\cal C}(o')|}
\end{equation}

\noindent
and combines all pairwise scores as

\begin{equation}
\label{eq:recallCICE}
Recall({\cal G},{\cal C})=\frac{1}{|O|}\sum_{o\in O}{\frac{1}{| \bigcup_{C\in {\cal C}(o)}{C} |}} \sum_{o'\in E_{\cal C}(o)}          {\tau(o,o')}
\end{equation}

\noindent
where $E_{\cal C}(o)=\{o':\ o \in C \textrm{ and } o' \in C \textrm{ for some }C \in{\cal C}\}$. Finally, CICE-BCubed Precision and Recall are combined into CICE-BCubed F$_1$ as

\begin{equation}
F_{1}({\cal G},{\cal C})=\frac{2 \cdot Precision({\cal G},{\cal C}) \cdot Recall({\cal G},{\cal C})}{Precision({\cal G},{\cal C})+Recall({\cal G},{\cal C})}\label{F}
\end{equation}

\vspace{0.5cm}

We now describe our adaptation of CICE-BCubed Precision and Recall for handling the evaluation of biclusterings. As we mentioned earlier, we apply a micro-objects transformation, by which a candidate biclustering $\ddot{\cal G}=\{\ddot{G}_1,\ddot{G}_2,$ $\ldots,\ddot{G}_{t_1}\}$, where $\ddot{G}_i=(\bar{G}_i,\mathring{G}_i)$ for $i \in \{1,\ldots, t_1\}$, is transformed into the candidate clustering $\widetilde{\cal G}=\{\widetilde{G}_1,\widetilde{G}_2,\ldots,\widetilde{G}_{t_1}\}$, where $\widetilde{G}_i=\bar{G}_i \times \mathring{G}_i$ for $i \in \{1,\ldots, t_1\}$. In an analogous manner, the gold standard $\ddot{\cal C}=\{(\bar{C}_1,\mathring{C}_1),(\bar{C}_2,\mathring{C}_2),$ $\ldots,(\bar{C}_{t_2},\mathring{C}_{t_2})\}$ is transformed into $\widetilde{\cal C}=\{\widetilde{C}_1,\widetilde{C}_2,\ldots,\widetilde{C}_{t_2}\}$, with $\widetilde{C}_i=\bar{C}_i \times \mathring{C}_i$ for $i \in \{1,\ldots, t_2\}$.

For $\ddot{\cal G}$ and $\ddot{\cal C}$, we will define our new measure by redefining CICE-BCubed Precision and Recall and combining them into F$_1$. We will refer to these redefinitions as \emph{Micro-object-space-fitted CICE-BCubed}, abbreviated to \emph{MOCICE-BCubed}. First, note that CICE-BCubed Precision and Recall are defined under the assumption that every object in the universe $O$ belongs to at least one candidate clustering and at least one class, \emph{i.e.} $\cup_{G \in {\cal G}}G=\cup_{C \in {\cal C}}C=O$. This assumption is not valid for clusterings obtained from biclusterings by the micro-objects transformation. Instead, we define the (possibly different) sets $$U_{\widetilde{\cal G}}=\bigcup_{i=1}^{t_1}\widetilde{G}_i \textrm{   and   } U_{\widetilde{\cal C}}=\bigcup_{i=1}^{t_2}\widetilde{C}_i$$ which are used for defining MOCICE-BCubed Precision and Recall as

\begin{equation}
\label{eq:PrecisionB2}
Precision(\ddot{\cal G},\ddot{\cal C})=\frac{1}{|U_{\widetilde{\cal G}}|}\sum_{x\in U_{\widetilde{\cal G}}}{\frac{1}{| \bigcup_{\widetilde{G} \in \widetilde{\cal G}(x)}{\widetilde{G}}|}}     \sum_{y\in E_{\widetilde{\cal G}}(x)}{\varsigma(x,y)}
\end{equation}

\noindent
and

\begin{equation}
\label{eq:recallB2}
Recall(\ddot{\cal G},\ddot{\cal C})=\frac{1}{|U_{\widetilde{\cal C}}|} \sum_{x\in U_{\widetilde{\cal C}}}{\frac{1}{|\bigcup_{\widetilde{C}\in \widetilde{\cal C}(x)}\widetilde{C}|}} \sum_{y\in E_{\widetilde{\cal C}}(x)}          {\tau(x,y)}
\end{equation}

Finally, MOCICE-BCubed Precision and Recall are combined into MOCICE-BCubed F$_1$ in the usual manner.

Consider a traditional clustering ${\cal G}=\{G_1,G_2,\ldots,G_{t_1}\}$ and a gold standard ${\cal C}=\{C_1,C_2,\ldots,C_{t_2}\}$. As we mentioned previously, ${\cal G}$ and ${\cal C}$ may be seen as two biclusterings $\ddot{\cal G}=\{(G_1,X),(G_2,X),\ldots,(G_{t_1},X)\}$ and $\ddot{\cal C}=\{(C_1,X),$ $(C_2,X),\ldots,(C_{t_2},X)\}$, where $X \subseteq F$ is an arbitrary feature set. When applied to such biclustering scenario, MOCICE-BCubed F$_1$ is equivalent to CICE-BCubed F$_1$ on the clustering $\widetilde{\cal G}=\{G_1 \times X,G_2 \times X,\ldots,G_{t_1} \times X\}$ and the gold standard $\widetilde{\cal C}=\{C_1 \times X,C_2 \times X,\ldots,C_{t_2} \times X\}$, as $\cup_{G \in {\cal G}}G=O$. Moreover, the following result on CICE-BCubed F$_1$ holds.

\begin{theorem}\label{equivForA1toA5}
Let ${\cal G}=\{G_1,G_2,\ldots,G_{t_1}\}$ and ${\cal C}=\{C_1,C_2,\ldots,C_{t_2}\}$ be a candidate clustering and a gold standard, respectively, on $(O,F)$. Let $X \subseteq F$ be an arbitrary feature set and let $\widetilde{\cal G}=\{G_1 \times X,G_2 \times X,\ldots,G_{t_1} \times X\}$ and $\widetilde{\cal C}=\{C_1 \times X,C_2 \times X,\ldots,C_{t_2} \times X\}$. Then, $$F_1(\widetilde{\cal G},\widetilde{\cal C})=F_1({\cal G},{\cal C}).$$
\end{theorem}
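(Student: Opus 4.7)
My strategy is to show that every ingredient of CICE-BCubed $F_1$ evaluated on $(\widetilde{\cal G},\widetilde{\cal C})$ collapses, via a change of variables separating the $O$-coordinate from the $X$-coordinate, to the corresponding ingredient on $({\cal G},{\cal C})$, with all factors of $|X|$ cancelling cleanly. I tacitly assume $X\neq\emptyset$, since otherwise $\widetilde{G}=\widetilde{C}=\emptyset$ for every $G,C$ and the left-hand side is undefined.

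First, I would exhibit the bijection $G\mapsto G\times X$ between ${\cal G}$ and $\widetilde{\cal G}$, and analogously between ${\cal C}$ and $\widetilde{\cal C}$. For any micro-object $(o,x)\in O\times X$, this bijection identifies $\widetilde{\cal G}((o,x))$ with ${\cal G}(o)$; and for any pair $(o,x),(o',x')\in O\times X$, it identifies the set of common clusters of $\widetilde{\cal G}$ containing both with ${\cal G}(o,o')$, since the condition $x,x'\in X$ is automatic. In particular, $|\widetilde{\cal G}((o,x))\cap \widetilde{\cal G}((o',x'))|=|{\cal G}(o)\cap {\cal G}(o')|$, and likewise for ${\cal C}$.

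Second, I would verify that Jaccard's coefficient is preserved under the product with $X$:
$$\frac{|(G\times X)\cap (C\times X)|}{|(G\times X)\cup (C\times X)|}=\frac{|G\cap C|\cdot |X|}{|G\cup C|\cdot |X|}=\frac{|G\cap C|}{|G\cup C|}.$$
Combined with the cardinality identities above, this yields $\Phi((o,x),(o',x'))=\Phi(o,o')$ from (\ref{eq:CII}), and hence $\varsigma((o,x),(o',x'))=\varsigma(o,o')$ and $\tau((o,x),(o',x'))=\tau(o,o')$ from (\ref{eq:varsigma}) and (\ref{eq:tau}) for all $x,x'\in X$.

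Finally, I would substitute into (\ref{eq:PrecisionCICE}) applied to the traditional clusterings $\widetilde{\cal G},\widetilde{\cal C}$ on the universe $O\times X$ (which is legitimate because $\cup_{G\in{\cal G}}G=\cup_{C\in{\cal C}}C=O$ gives $\cup_{\widetilde{G}\in\widetilde{\cal G}}\widetilde{G}=\cup_{\widetilde{C}\in\widetilde{\cal C}}\widetilde{C}=O\times X$). One checks that $\bigcup_{\widetilde{G}\in\widetilde{\cal G}((o,x))}\widetilde{G}=\bigl(\bigcup_{G\in{\cal G}(o)}G\bigr)\times X$ and $E_{\widetilde{\cal G}}((o,x))=E_{\cal G}(o)\times X$, so the two nested sums over $x\in X$ and $x'\in X$ each contribute a factor $|X|$, which exactly cancels the $|X|$ factors appearing in $|O\times X|$ and in the size of the inner union, recovering $Precision({\cal G},{\cal C})$. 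The identical calculation with $\tau$ in place of $\varsigma$ gives the Recall identity, and the desired equality follows from (\ref{F}). The only delicate point is the careful bookkeeping of these $|X|$ cancellations; everything else is a direct consequence of the fact that crossing with a single fixed set $X$ preserves every set-theoretic ratio on which CICE-BCubed depends.
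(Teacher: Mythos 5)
Your proposal is correct and follows essentially the same route as the paper's own proof: establish that the $|X|$-fold product preserves the cluster-membership cardinalities and the Jaccard coefficients (hence $\Phi$, $\varsigma$ and $\tau$), and then cancel the resulting factors of $|X|$ in the Precision and Recall sums. Your additional remarks --- the explicit bijection $G\mapsto G\times X$, the caveat that $X\neq\emptyset$, and the observation that $\cup_{G\in{\cal G}}G=O$ legitimises applying the traditional-clustering formulas on $O\times X$ --- are welcome refinements of steps the paper treats as ``simple to see,'' but they do not change the argument.
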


\begin{proof}
It is simple to see that $|\widetilde{\cal G}((o,f))|=|{\cal G}(o)|$ and $|\widetilde{\cal C}((o,f))|=|{\cal C}(o)|$ for every $o \in O$ and every $f \in X$. Moreover, in Eq.~\ref{eq:CII}, we have that for any $G \in {\cal G}$ and any $C \in {\cal C}$, $$\frac{|(G \times X)\cap(C \times X)|}{|(G \times X)\cup(C \times X)|}=\frac{|X|\cdot|G \cap C|}{|X|\cdot|G \cup C|}=\frac{|G \cap C|}{|G \cup C|}.$$

Thus, $\Phi((o,f),(o',f'))=\Phi(o,o')$ for every $o \in O$ and every $f \in X$. In consequence, $\varsigma((o,f),(o',f'))=\varsigma(o,o')$ and $\tau((o,f),(o',f'))=\tau(o,o')$ for every $o \in O$ and every $f \in X$. 

Furthermore, we have that $E_{\widetilde{\cal G}}((o,f))=E_{\cal G}(o) \times X$ and $E_{\widetilde{\cal C}}((o,f))=E_{\cal C}(o) \times X$ for every $o \in O$ and every $f \in X$. Finally, $|\bigcup_{\widetilde{G} \in \widetilde{\cal G}((o,f))}\widetilde{G}|=|X|\cdot|\bigcup_{G \in {\cal G}(o)}G|$, so

\begin{displaymath}
\begin{array}{rcl}
Precision(\widetilde{\cal G},\widetilde{\cal C})&=&\displaystyle\frac{1}{|O|\cdot|X|}\displaystyle\sum_{(o,f)\in O \times X}{\frac{1}{\left|\displaystyle\bigcup_{\widetilde{G}\in \widetilde{\cal G}((o,f))}{\widetilde{G}}\right|}}\displaystyle\sum_{(o',f')\in E_{\widetilde{\cal G}}((o,f))}{\varsigma((o,f),(o',f'))}\\
\\
&=&\displaystyle\frac{1}{|O|\cdot|X|}\displaystyle\sum_{(o,f)\in O \times X}{\frac{|X|}{|X|\cdot\left|\displaystyle\bigcup_{G\in {\cal G}(o)}{G}\right|}}\sum_{o' \in E_{\cal G}(o)}{\varsigma(o,o')}\\
\\
&=&\displaystyle\frac{|X|}{|O|\cdot|X|}\sum_{o \in O}{\frac{1}{|\bigcup_{G\in {\cal G}(o)}{G}|}}\sum_{o' \in E_{\cal G}(o)}{\varsigma(o,o')}\\
\\
&=&Precision({\cal G},{\cal C})
\end{array}
\end{displaymath}

\noindent
and, by an analogous reasoning, $Recall(\widetilde{\cal G},\widetilde{\cal C})=Recall({\cal G},{\cal C})$. In consequence, F$_1(\widetilde{\cal G},\widetilde{\cal C})=\textrm{F}_1({\cal G},{\cal C})$, so the proof is complete.
\end{proof}

As a consequence of Theorem~\ref{equivForA1toA5}, we can say that transforming a clustering ${\cal G}$ and a gold standard ${\cal C}$ into biclusterings and computing MOCICE-BCubed F$_1$ on them is equivalent to computing CICE-BCubed F$_1$ on ${\cal G}$ and ${\cal C}$, so under this transformation conditions A.1 to A.5 continue to be satisfied. In what follows, we will show that MOCICE-BCubed F$_1$ also satisfies conditions B.1 to B.5.

\begin{theorem}\label{th:PNIA}
MOCICE-BCubed F$_1$ satisfies condition B.1 (Penalty for non-intersection area).
\end{theorem}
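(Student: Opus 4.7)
The plan is to track how each factor of MOCICE-BCubed Precision and Recall responds to the micro-objects that appear in $\widetilde{\cal G}'$ but not in $\widetilde{\cal G}$. The starting observation, on which everything else rests, is that because $o$ belongs to no class of $\ddot{\cal C}$, every new micro-object $(o,f)$ lies in $U_{\widetilde{\cal G}'}\setminus U_{\widetilde{\cal C}}$; equivalently, its class set is empty. I would split the analysis into the two subcases allowed by B.1: $o$ is added as a new singleton bicluster, or $o$ is added to an existing bicluster $\ddot{G}_i$, in which case $\widetilde{G}_i$ grows to $\widetilde{G}_i\cup(\{o\}\times\mathring{G}_i)$.

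For Recall (Eq.~\ref{eq:recallB2}) I would observe that the outer sum is over $U_{\widetilde{\cal C}}$, which is unchanged, and that both the factor $1/|\bigcup_{\widetilde{C}\in\widetilde{\cal C}(x)}\widetilde{C}|$ and the set $E_{\widetilde{\cal C}}(x)$ depend only on $\widetilde{\cal C}$, so they are unaffected. The only way Recall could move is through $\tau(x,y)$: its factor $|\widetilde{\cal G}(x)\cap\widetilde{\cal G}(y)|$ is unchanged (old micro-objects keep the same cluster memberships), and within $\Phi$ the only cluster whose micro-object count grows is $\widetilde{G}_i$, for which $|\widetilde{G}_i\cap\widetilde{C}|$ stays the same while $|\widetilde{G}_i\cup\widetilde{C}|$ can only grow. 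Hence $\Phi(x,y)$, and therefore $\tau(x,y)$, is non-increasing, so Recall does not increase.

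For Precision (Eq.~\ref{eq:PrecisionB2}) I would argue that every new $x=(o,f)$ contributes nothing to the outer sum, because $\widetilde{\cal C}(x)=\emptyset$ forces $\min(|\widetilde{\cal G}(x)\cap\widetilde{\cal G}(y)|,|\widetilde{\cal C}(x)\cap\widetilde{\cal C}(y)|)=0$ and hence $\varsigma(x,y)=0$ for every $y$. For each old $x$, the inner sum does not grow: any new element of $E_{\widetilde{\cal G}'}(x)$ is a new micro-object and yields $\varsigma(x,y)=0$ by the same argument, while on old pairs $\varsigma$ is non-increasing since $\Phi$ is. Combined with the strict inequality $|U_{\widetilde{\cal G}'}|>|U_{\widetilde{\cal G}}|$, this forces Precision to strictly decrease, provided the original Precision is positive. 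Since $F_1=2PR/(P+R)$ is strictly increasing in $P$ for fixed positive $R$, this strict drop propagates to $F_1(\ddot{\cal G}',\ddot{\cal C})<F_1(\ddot{\cal G},\ddot{\cal C})$.

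The step I expect to be most delicate is establishing the strict Precision decrease uniformly across both subcases. The argument cannot rely on $\Phi$ strictly dropping, because in the singleton-bicluster subcase no pre-existing cluster is altered and every old $\Phi(x,y)$ stays the same; strictness must be extracted instead from the strict growth of $|U_{\widetilde{\cal G}'}|$ together with the non-negativity of the inner sums, which requires a brief verification that the pre-addition Precision is nonzero in any non-degenerate instance.
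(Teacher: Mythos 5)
Your proposal is correct and follows the same decomposition as the paper's own (very terse) proof: the extra micro-objects generated by the non-clustered object strictly lower Precision while leaving Recall no better, so $F_1$ drops. Your write-up is in fact more careful than the paper's on two points --- you observe that Recall is only non-increasing rather than exactly preserved (when $o$ joins an existing cluster, the Jaccard terms inside $\Phi$ strictly shrink, so Recall can strictly decrease, whereas the paper asserts equality), and you correctly locate the source of strictness in the growth of $|U_{\widetilde{\cal G}'}|$ rather than in $\Phi$, flagging the degenerate zero-Precision case that the paper silently ignores.
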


\begin{proof}
Let $\ddot{\cal G}$ and $\ddot{\cal G}'$ be a pair of candidate biclusterings satisfying the premises of condition B.1, and let $\ddot{\cal C}$ be the gold standard. We have that $Precision(\ddot{\cal G},\ddot{\cal C})<Precision(\ddot{\cal G}',\ddot{\cal C})$ because of the extra, incorrectly clustered, object(s) involved in $\ddot{\cal G}'$. Moreover, $Recall(\ddot{\cal G},\ddot{\cal C})=Recall(\ddot{\cal G}',\ddot{\cal C})$, so $F_1(\ddot{\cal G},\ddot{\cal C})<F_1(\ddot{\cal G}',\ddot{\cal C})$, as required by the condition.
\end{proof}

\begin{theorem}\label{th:background}
MOCICE-BCubed F$_1$ satisfies condition B.2 (Background independence).
\end{theorem}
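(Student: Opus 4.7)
My plan is to argue that every term appearing in the definitions of MOCICE-BCubed Precision and Recall is determined exclusively by the biclusters themselves, so that enlarging the ambient collection with background objects leaves all terms (and hence the final $F_1$ score) unchanged.

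First I would fix $(\ddot{\cal G}_X,\ddot{\cal C}_X)$ and its extension $(\ddot{\cal G}_{X'},\ddot{\cal C}_{X'})$ satisfying the premises of B.2, apply the micro-objects transformation to obtain $(\widetilde{\cal G}_X,\widetilde{\cal C}_X)$ and $(\widetilde{\cal G}_{X'},\widetilde{\cal C}_{X'})$, and observe that, because going from $X$ to $X'$ only adds non-clustered objects (and/or features), the biclusters themselves are untouched, so $\widetilde{\cal G}_X=\widetilde{\cal G}_{X'}$ and $\widetilde{\cal C}_X=\widetilde{\cal C}_{X'}$ as sets of micro-object blocks. In particular $U_{\widetilde{\cal G}_X}=U_{\widetilde{\cal G}_{X'}}$ and $U_{\widetilde{\cal C}_X}=U_{\widetilde{\cal C}_{X'}}$.

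Next I would walk through Eqs.~\ref{eq:PrecisionB2} and~\ref{eq:recallB2} and verify invariance term by term. The outer normalizers $1/|U_{\widetilde{\cal G}}|$ and $1/|U_{\widetilde{\cal C}}|$ and the outer summation ranges $x\in U_{\widetilde{\cal G}}$, $x\in U_{\widetilde{\cal C}}$ are identical in both settings by the previous paragraph. The inner normalizers $|\bigcup_{\widetilde{G}\in\widetilde{\cal G}(x)}\widetilde{G}|$ and $|\bigcup_{\widetilde{C}\in\widetilde{\cal C}(x)}\widetilde{C}|$ and the inner summation ranges $E_{\widetilde{\cal G}}(x),E_{\widetilde{\cal C}}(x)$ are likewise functions only of the (unchanged) bicluster families. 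Finally, the pairwise scores $\varsigma(x,y)$ and $\tau(x,y)$ (Eqs.~\ref{eq:varsigma} and~\ref{eq:tau}) are built from $|\widetilde{\cal G}(x)\cap\widetilde{\cal G}(y)|$, $|\widetilde{\cal C}(x)\cap\widetilde{\cal C}(y)|$ and $\Phi(x,y)$, which by Eq.~\ref{eq:CII} depends on Jaccard coefficients of biclusters in $\widetilde{\cal G}(x,y)$ and $\widetilde{\cal C}(x,y)$; all of these quantities are invariant under the extension.

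Combining these observations yields $Precision(\ddot{\cal G}_X,\ddot{\cal C}_X)=Precision(\ddot{\cal G}_{X'},\ddot{\cal C}_{X'})$ and $Recall(\ddot{\cal G}_X,\ddot{\cal C}_X)=Recall(\ddot{\cal G}_{X'},\ddot{\cal C}_{X'})$, whence $F_1$ agrees on both pairs. I do not foresee a real obstacle; the only conceptual point worth emphasizing is exactly the one that distinguishes MOCICE-BCubed from Rand's index or VI in the previous section, namely that summing over $U_{\widetilde{\cal G}}\cup U_{\widetilde{\cal C}}$ rather than over the whole micro-object universe is what prevents background micro-objects from leaking into the score, so no auxiliary singleton padding is required and no spurious dependence on $X$ is introduced.
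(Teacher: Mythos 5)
Your proposal is correct and takes the same approach as the paper, whose proof is simply the one-line observation that background independence follows directly from how $U_{\widetilde{\cal G}}$ and $U_{\widetilde{\cal C}}$ are defined; you have merely spelled out in full the term-by-term invariance that this observation summarizes. The only nitpick is that Precision and Recall each sum over $U_{\widetilde{\cal G}}$ and $U_{\widetilde{\cal C}}$ separately rather than over their union, but this does not affect your argument.
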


\begin{proof}
The result is a direct consequence of the manner in which $U_{\widetilde{\cal G}}$ and $U_{\widetilde{\cal C}}$ are defined.
\end{proof}

\begin{theorem}\label{th:scale_inv}
MOCICE-BCubed F$_1$ satisfies condition B.3 (Scale invariance).
\end{theorem}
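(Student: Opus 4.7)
The plan is to exploit a canonical bijection between micro-objects in the scaled universe and the index-tagged copies of micro-objects in the original universe, and to show that every ingredient of the Precision and Recall formulas scales by a uniform power of $k$ that ultimately cancels.

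First I would fix notation. For every $x=(o,f) \in U_{\widetilde{\cal G}}$ (and analogously for $U_{\widetilde{\cal C}}$) there are exactly $k^2$ associated micro-objects in the scaled universe, namely the pairs $\widehat{x}=((o,i),(f,j))$ with $i,j \in \{1,\ldots,k\}$. The map $\ddot{G} \mapsto \ddot{G}'$ is a bijection between $\ddot{\cal G}$ and $\ddot{\cal G}'$ that preserves every set-theoretic relation: $\widetilde{G}' \ni \widehat{x}$ iff $\widetilde{G} \ni x$, which immediately yields $|\widetilde{\cal G}'(\widehat{x})|=|\widetilde{\cal G}(x)|$, $|\widetilde{\cal G}'(\widehat{x}) \cap \widetilde{\cal G}'(\widehat{y})|=|\widetilde{\cal G}(x) \cap \widetilde{\cal G}(y)|$, and the analogous identities for $\widetilde{\cal C}$. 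Likewise, $E_{\widetilde{\cal G}'}(\widehat{x})$ is precisely the union of all $k^2$ scaled copies of each $y \in E_{\widetilde{\cal G}}(x)$, so $|E_{\widetilde{\cal G}'}(\widehat{x})|=k^2 |E_{\widetilde{\cal G}}(x)|$.

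Next I would verify that the Jaccard-based CII $\Phi$ is preserved. Since the scaling replaces every $\bar{G}$ by $\bar{G} \times \{1,\ldots,k\}$ and every $\mathring{G}$ by $\mathring{G} \times \{1,\ldots,k\}$, we have $|\widetilde{G}' \cap \widetilde{C}'|=k^2 |\widetilde{G} \cap \widetilde{C}|$ and $|\widetilde{G}' \cup \widetilde{C}'|=k^2 |\widetilde{G} \cup \widetilde{C}|$, so the Jaccard coefficients inside Eq.~\ref{eq:CII} are unchanged and $\Phi(\widehat{x},\widehat{y})=\Phi(x,y)$. Combined with the invariance of the intersection cardinalities appearing in Eqs.~\ref{eq:varsigma} and \ref{eq:tau}, this forces $\varsigma(\widehat{x},\widehat{y})=\varsigma(x,y)$ and $\tau(\widehat{x},\widehat{y})=\tau(x,y)$.

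Finally I would substitute these equalities into Eq.~\ref{eq:PrecisionB2} applied to $\ddot{\cal G}'$ and count the factors of $k$: the outer normalization contributes $1/(k^2 |U_{\widetilde{\cal G}}|)$; collapsing the sum over $\widehat{x}$ to the sum over $x$ pulls out a factor of $k^2$; the reciprocal $1/|\bigcup_{\widetilde{G}' \in \widetilde{\cal G}'(\widehat{x})} \widetilde{G}'|=1/(k^2 |\bigcup_{\widetilde{G} \in \widetilde{\cal G}(x)} \widetilde{G}|)$ contributes another $1/k^2$; and collapsing the inner sum over $E_{\widetilde{\cal G}'}(\widehat{x})$ yields a further factor of $k^2$. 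All four powers of $k^{\pm 2}$ cancel, leaving $Precision(\ddot{\cal G}',\ddot{\cal C}')=Precision(\ddot{\cal G},\ddot{\cal C})$. An entirely symmetric argument using Eq.~\ref{eq:recallB2} gives $Recall(\ddot{\cal G}',\ddot{\cal C}')=Recall(\ddot{\cal G},\ddot{\cal C})$, and the invariance of F$_1$ is then immediate. The only real obstacle is bookkeeping, keeping the object-index and feature-index factors of $k$ cleanly separated and verifying that the $k^2$ scaling of the Jaccard denominator remains valid when biclusters overlap, but no conceptual difficulty appears beyond the cancellation itself.
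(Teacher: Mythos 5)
Your proof is correct and follows essentially the same route as the paper's: show that $\Phi$, and hence $\varsigma$ and $\tau$, are unchanged under scaling because the Jaccard ratios cancel, then collapse the sums in Precision and Recall so that all scaling factors cancel. Your bookkeeping with $k^2$ (object copies times feature copies per micro-object) is in fact more precise than the paper's, which writes a single factor of $k$ throughout; the cancellation is unaffected either way.
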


\begin{proof}
Let $\ddot{\cal G}$ be a candidate biclustering and let $\ddot{\cal C}$ be a gold standard. Let $\ddot{\cal G}'$ and $\ddot{\cal C}'$ be the $k$-scaled versions of $\ddot{\cal G}$ and $\ddot{\cal C}$, respectively, and let $\widetilde{\cal G}$, $\widetilde{\cal G}'$, $\widetilde{\cal C}$ and $\widetilde{\cal C}'$ be the micro-object-space clusterings into which $\ddot{\cal G}$, $\ddot{\cal G}'$, $\ddot{\cal C}$ and $\ddot{\cal C}'$ are transformed, respectively. It is simple to see that $|\widetilde{\cal G}'(x)|=|\widetilde{\cal G}(x)|$ and $|\widetilde{\cal C}'(x)|=|\widetilde{\cal C}(x)|$ for any $x$. Moreover, for any $\widetilde{G}\in\widetilde{\cal G}$ and any $\widetilde{C} \in \widetilde{\cal C}$, $$\frac{|\widetilde{G}' \cap\widetilde{C}'|}{|\widetilde{G}'\cup\widetilde{C}'|} = \frac{k\cdot|\widetilde{G} \cap\widetilde{C}|}{k\cdot|\widetilde{G}\cup\widetilde{C}|}= \frac{|\widetilde{G} \cap\widetilde{C}|}{|\widetilde{G}\cup\widetilde{C}|},$$

\noindent
so the value yielded by $\widetilde{\Phi}(x,y)$ for any pair $x,y$  when evaluating $(\ddot{\cal G}',\ddot{\cal C}')$ is the same yielded when evaluating $(\ddot{\cal G},\ddot{\cal C})$. As a consequence of the aforementioned facts, we have that the values yielded by $\varsigma(x,y)$ and $\tau(x,y)$ for any pair $x,y$  when evaluating $(\ddot{\cal G}',\ddot{\cal C}')$ are also the same yielded when evaluating $(\ddot{\cal G},\ddot{\cal C})$. Thus,

\begin{displaymath}
\begin{array}{rcl}
Precision(\ddot{\cal G}',\ddot{\cal C}')&=&\displaystyle\frac{1}{|U_{\widetilde{\cal G}'}|}\sum_{x\in U_{\widetilde{\cal G}'}}{\frac{1}{| \bigcup_{\widetilde{G}' \in \widetilde{\cal G}'(x)}{\widetilde{G}'}|}}\sum_{y\in E_{\widetilde{\cal G}'}(x)}{\varsigma(x,y)}\\
&=&\displaystyle\frac{1}{k\cdot|U_{\widetilde{\cal G}}|}\sum_{x\in U_{\widetilde{\cal G}}}k\cdot\frac{1}{k\cdot|\bigcup_{\widetilde{G} \in \widetilde{\cal G}(x)}{\widetilde{G}}|}\sum_{y\in E_{\widetilde{\cal G}}(x)}{k\cdot\varsigma(x,y)}\\
&=&Precision(\ddot{\cal G},\ddot{\cal C})
\end{array}
\end{displaymath}

\noindent
and, by an analogous reasoning, $Recall(\ddot{\cal G}',\ddot{\cal C}')=Recall(\ddot{\cal G},\ddot{\cal C})$. In consequence, $F_1(\ddot{\cal G}',\ddot{\cal C}')=F_1(\ddot{\cal G},\ddot{\cal C})$, so the proof is complete.
\end{proof}

\begin{theorem}\label{th:copy_inv}
MOCICE-BCubed F$_1$ satisfies condition B.4 (Copy invariance).
\end{theorem}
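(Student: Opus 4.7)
The plan is to mirror the structure of the Scale invariance proof (Theorem~\ref{th:scale_inv}), exploiting the fact that ``disjoint copies'' has essentially the same effect on the micro-objects universe as scaling, but replicated across whole biclusters rather than inside each one. First, I would apply the micro-objects transformation to both $\ddot{\cal G}'$ and $\ddot{\cal C}'$, obtaining $\widetilde{\cal G}'$ and $\widetilde{\cal C}'$. By disjointness of the $k$ copies, the universe $U_{\widetilde{\cal G}'}$ decomposes as $k$ pairwise disjoint, isomorphic replicas of $U_{\widetilde{\cal G}}$, and similarly $U_{\widetilde{\cal C}'}$ decomposes into $k$ copies of $U_{\widetilde{\cal C}}$. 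In particular $|U_{\widetilde{\cal G}'}| = k\cdot|U_{\widetilde{\cal G}}|$ and $|U_{\widetilde{\cal C}'}| = k\cdot|U_{\widetilde{\cal C}}|$.

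Next I would verify, for any micro-object $x$ living in the $i$-th copy with corresponding original $x_0$, that all the local quantities entering MOCICE-BCubed Precision and Recall coincide with those of $x_0$. Concretely, because a cluster from copy $j\neq i$ cannot contain $x$, we have $|\widetilde{\cal G}'(x)| = |\widetilde{\cal G}(x_0)|$ and $|\widetilde{\cal C}'(x)| = |\widetilde{\cal C}(x_0)|$, and the Jaccard terms $|\widetilde{G}'\cap\widetilde{C}'|/|\widetilde{G}'\cup\widetilde{C}'|$ in the Cluster Identity Index (Eq.~\ref{eq:CII}) reduce to the original ones, so $\Phi(x,y) = \Phi(x_0,y_0)$ for any pair $x,y$ within the same copy. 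Consequently $\varsigma(x,y) = \varsigma(x_0,y_0)$ and $\tau(x,y) = \tau(x_0,y_0)$, and moreover $E_{\widetilde{\cal G}'}(x)$ is in bijection with $E_{\widetilde{\cal G}}(x_0)$ (and likewise for $E_{\widetilde{\cal C}'}$), since co-occurrences with $x$ can only arise within copy $i$.

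Finally I would substitute these identities into Eqs.~\ref{eq:PrecisionB2} and~\ref{eq:recallB2}. The outer sum over $U_{\widetilde{\cal G}'}$ breaks as $k$ identical copies of the sum over $U_{\widetilde{\cal G}}$, and the prefactor $1/|U_{\widetilde{\cal G}'}| = 1/(k\cdot|U_{\widetilde{\cal G}}|)$ absorbs the extra factor of $k$, yielding $Precision(\ddot{\cal G}',\ddot{\cal C}') = Precision(\ddot{\cal G},\ddot{\cal C})$. An analogous computation establishes the same equality for Recall, and hence $F_1(\ddot{\cal G}',\ddot{\cal C}') = F_1(\ddot{\cal G},\ddot{\cal C})$.

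The main obstacle is conceptual rather than computational: making the disjointness of the $k$ copies fully rigorous at the micro-object level, so that no pair $(x,y)$ with $x$ and $y$ in different copies ever contributes to a $\Phi$, $\varsigma$, $\tau$, or $E$ term. Once this decomposition is cleanly stated, the remainder is a direct bookkeeping argument very close to that of Theorem~\ref{th:scale_inv}, with copies playing the role that scale factors play there.
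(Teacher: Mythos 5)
Your argument is sound, but it formalizes ``the disjoint union of $k$ copies'' differently from the paper, and the two readings lead to genuinely different proofs. You make the copies disjoint at the level of the underlying data: the micro-object universe splits into $k$ pairwise disjoint replicas, each carrying its own replica of the clustering. Under that reading everything you write checks out: for $x$ in copy $i$ only copy-$i$ clusters and classes can contain $x$, so $\widetilde{\cal G}'(x)$, $\widetilde{\cal C}'(x)$, $E_{\widetilde{\cal G}'}(x)$, the Jaccard terms in Eq.~\ref{eq:CII}, and hence $\Phi$, $\varsigma$, $\tau$ all coincide with their originals, the outer sums in Eqs.~\ref{eq:PrecisionB2} and~\ref{eq:recallB2} split into $k$ identical blocks, and the prefactor $1/(k\cdot|U_{\widetilde{\cal G}}|)$ absorbs the factor $k$; the ``obstacle'' you flag is immediate from disjointness of the replicas. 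The paper instead keeps the micro-object universe fixed and duplicates every cluster and class $k$ times over the same objects (treating the clusterings as multisets): there $U_{\widetilde{\cal G}'}=U_{\widetilde{\cal G}}$, $E_{\widetilde{\cal G}'}(x)=E_{\widetilde{\cal G}}(x)$ and the unions $\bigcup_{\widetilde{G}\in\widetilde{\cal G}'(x)}\widetilde{G}$ are untouched, while $|\widetilde{\cal G}'(x,y)|=k\cdot|\widetilde{\cal G}(x,y)|$, and the invariance comes from the $k$'s cancelling inside the per-pair quantities --- a $\frac{1}{k\cdot|\cdot|}\sum k(\cdot)$ in $\Phi$, and $\min(ka,kb)/(ka)=\min(a,b)/a$ in $\varsigma$ and $\tau$. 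In short, your invariance lives entirely in the outer averaging, the paper's entirely in the per-pair scores; both computations are correct, but which one actually establishes B.4 depends on the intended formalization, so you should state yours explicitly and note that the other reading needs the separate (equally short) cancellation argument rather than following from your decomposition.
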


\begin{proof}
Let $\ddot{\cal G}$ be a candidate biclustering and let $\ddot{\cal C}$ be a gold standard. Let $\ddot{\cal G}'$ and $\ddot{\cal C}'$ be the $k$-copied versions of $\ddot{\cal G}$ and $\ddot{\cal C}$, respectively, and let $\widetilde{\cal G}$, $\widetilde{\cal G}'$, $\widetilde{\cal C}$ and $\widetilde{\cal C}'$ be the micro-object-space clusterings into which $\ddot{\cal G}$, $\ddot{\cal G}'$, $\ddot{\cal C}$ and $\ddot{\cal C}'$ are transformed, respectively. For any $x$, we have that $\widetilde{\cal G}'(x)$ is the disjoint union of $k$ copies of $\widetilde{\cal G}(x)$ and $\widetilde{\cal C}'(x)$ is the disjoint union of $k$ copies of $\widetilde{\cal C}(x)$. Moreover,

\begin{displaymath}
\begin{array}{c}
\displaystyle\frac{1}{|\widetilde{\cal G}'(x,y)|}\sum_{\widetilde{G}'\in \widetilde{\cal G}'(x,y)} {\max_{\widetilde{C}' \in \widetilde{\cal C}'(x,y)} \left\{\frac{|\widetilde{G}' \cap \widetilde{C}'|}{|\widetilde{G}' \cup \widetilde{C}'|}\right\}}=\\
=\displaystyle\frac{1}{k\cdot|\widetilde{\cal G}(x,y)|}\sum_{\widetilde{G}\in \widetilde{\cal G}(x,y)} {k\cdot\max_{\widetilde{C} \in \widetilde{\cal C}(x,y)} \left\{\frac{|\widetilde{G} \cap \widetilde{C}|}{|\widetilde{G} \cup \widetilde{C}|}\right\}}=\\
=\displaystyle\frac{1}{|\widetilde{\cal G}(x,y)|}\sum_{\widetilde{G}\in \widetilde{\cal G}(x,y)} {\max_{\widetilde{C} \in \widetilde{\cal C}(x,y)} \left\{\frac{|\widetilde{G} \cap \widetilde{C}|}{|\widetilde{G} \cup \widetilde{C}|}\right\}},
\end{array}
\end{displaymath}

\noindent
so the value yielded by $\widetilde{\Phi}(x,y)$ for any pair $x,y$  when evaluating $(\ddot{\cal G}',\ddot{\cal C}')$ is the same yielded when evaluating $(\ddot{\cal G},\ddot{\cal C})$. As a consequence of the aforementioned facts, we have that the values yielded by $\varsigma(x,y)$ and $\tau(x,y)$ for any pair $x,y$  when evaluating $(\ddot{\cal G}',\ddot{\cal C}')$ are also the same yielded when evaluating $(\ddot{\cal G},\ddot{\cal C})$, as the role of the $k$ copies is simplified out when performing the divisions in Eqs.~\ref{eq:varsigma} and~\ref{eq:tau}. Thus,

\begin{displaymath}
\begin{array}{rcl}
Precision(\ddot{\cal G}',\ddot{\cal C}')&=&\displaystyle\frac{1}{|U_{\widetilde{\cal G}'}|}\sum_{x\in U_{\widetilde{\cal G}'}}{\frac{1}{| \bigcup_{\widetilde{G}' \in \widetilde{\cal G}'(x)}{\widetilde{G}'}|}}\sum_{y\in E_{\widetilde{\cal G}'}(x)}{\varsigma(x,y)}\\
&=&\displaystyle\frac{1}{|U_{\widetilde{\cal G}}|}\sum_{x\in U_{\widetilde{\cal G}}}\frac{1}{|\bigcup_{\widetilde{G} \in \widetilde{\cal G}(x)}{\widetilde{G}}|}\sum_{y\in E_{\widetilde{\cal G}}(x)}{\varsigma(x,y)}\\
&=&Precision(\ddot{\cal G},\ddot{\cal C})
\end{array}
\end{displaymath}

\noindent
and, by an analogous reasoning, $Recall(\ddot{\cal G}',\ddot{\cal C}')=Recall(\ddot{\cal G},\ddot{\cal C})$. In consequence, $F_1(\ddot{\cal G}',\ddot{\cal C}')=F_1(\ddot{\cal G},\ddot{\cal C})$, so the proof is complete.
\end{proof}

\begin{theorem}\label{th:multilple_cluster}
MOCICE-BCubed F$_1$ satisfies condition B.5 (Multiple cluster coverage penalty).
\end{theorem}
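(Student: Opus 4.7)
The plan is to apply the micro-objects transformation and then establish that MOCICE-BCubed Recall is strictly less than $1$, which immediately forces F$_1$ below its optimum value of $1$. After the transformation, the candidate becomes $\widetilde{\cal G} = \{\widetilde{G}\}$ with $\widetilde{G} = \bar{G} \times \mathring{G}$, while the gold standard is $\widetilde{\cal C} = \{\widetilde{C}_1, \ldots, \widetilde{C}_t\}$ with $t \geq 2$. The premise of B.5 forces the containment $\widetilde{C}_i \subseteq \widetilde{G}$ for every $i$, so $U_{\widetilde{\cal C}} \subseteq \widetilde{G}$, and the singleton candidate therefore gives $|\widetilde{\cal G}(x)| = 1$ for every $x \in U_{\widetilde{\cal C}}$.

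The first step is to observe that, because $|E_{\widetilde{\cal C}}(x)| = |\bigcup_{\widetilde{C} \in \widetilde{\cal C}(x)} \widetilde{C}|$, the Recall score is an average of the values $\tau(x,y)$ taken over $x \in U_{\widetilde{\cal C}}$ and $y \in E_{\widetilde{\cal C}}(x)$, each lying in $[0,1]$. Hence Recall $= 1$ forces $\tau(x,y) = 1$ for every such pair, and in particular $\tau(x,x) = 1$ for every $x \in U_{\widetilde{\cal C}}$. Substituting $|\widetilde{\cal G}(x)| = 1$ into Eq.~\ref{eq:tau} collapses this to $\Phi(x,x)/|\widetilde{\cal C}(x)| = 1$, which, since $\Phi(x,x) \leq 1$, forces both $\Phi(x,x) = 1$ and $|\widetilde{\cal C}(x)| = 1$ at every such $x$. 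The next step is a case analysis. If no class equals $\widetilde{G}$, then every $\widetilde{C} \in \widetilde{\cal C}(x)$ is a proper subset of $\widetilde{G}$, so the Jaccard value $|\widetilde{C}|/|\widetilde{G}|$ is strictly less than $1$ for each term in the max defining $\Phi(x,x)$, contradicting $\Phi(x,x) = 1$. If instead some class $\widetilde{C}_j$ equals $\widetilde{G}$, then by $t \geq 2$ and the distinctness of classes there exists a second non-empty class $\widetilde{C}_k \neq \widetilde{C}_j$; any $x \in \widetilde{C}_k$ then satisfies $\{\widetilde{C}_j, \widetilde{C}_k\} \subseteq \widetilde{\cal C}(x)$, contradicting $|\widetilde{\cal C}(x)| = 1$. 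Either branch produces a contradiction, so Recall $< 1$ and therefore F$_1 < 1$.

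The main obstacle will be the sub-case in which $\widetilde{G}$ coincides with one of the classes, because there the Jaccard-based term $\Phi$ can still attain $1$ and the penalty is invisible to Precision; it must come from Recall. The observation that unlocks this case is that the denominator $|\widetilde{\cal C}(x)|$ in the formula for $\tau$ penalises each micro-object that belongs to more than one class, which is precisely what happens when a proper subclass $\widetilde{C}_k$ sits inside $\widetilde{C}_j = \widetilde{G}$. The plan is therefore to state explicitly the customary non-degeneracy assumption that every class is non-empty, so that the distinctness of elements of $\widetilde{\cal C}$ forces any second class to introduce a genuine multi-class overlap at each of its micro-objects.
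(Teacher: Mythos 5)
Your proof is correct, but it follows a genuinely different route from the paper's. The paper disposes of B.5 in two lines: it observes that MOCICE-BCubed F$_1$ coincides with CICE-BCubed F$_1$ computed on the micro-object clusterings $\widetilde{\cal G}$ and $\widetilde{\cal C}$, and then invokes the Perfect Match condition A.5 --- already established for CICE-BCubed F$_1$ in the cited prior work --- which awards the optimum score only when candidate and gold standard are identical; since $\widetilde{\cal G}$ is a singleton while $\widetilde{\cal C}$ contains $t\geq 2$ distinct non-empty clusters, the score cannot be optimal. You instead re-derive the relevant half of A.5 from scratch for this particular configuration: you show Recall $<1$ by writing it as an average of $\tau$ over $U_{\widetilde{\cal C}}$, reducing the diagonal terms to $\tau(x,x)=\Phi(x,x)/|\widetilde{\cal C}(x)|$ via $|\widetilde{\cal G}(x)|=1$, and splitting on whether some class equals $\widetilde{G}$ (where the $|\widetilde{\cal C}(x)|$ denominator detects the forced multi-class overlap) or not (where the Jaccard term inside $\Phi$ falls strictly below $1$). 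What your version buys is self-containment and diagnostic precision: it does not lean on the external reference for A.5, it identifies Recall rather than Precision as the component that penalizes covering multiple classes, and it isolates the two distinct mechanisms by which the penalty arises. What it costs is length, plus the need to make explicit the non-degeneracy assumptions ($t\geq 2$ and non-empty, hence distinct, transformed classes) that the paper leaves implicit in its appeal to A.5; you handle both correctly.
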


\begin{proof}
The result is a direct consequence of MOCICE-BCubed F$_1$ being equivalent to CICE-BCubed F$_1$ on the micro-objects space and CICE-BCubed F$_1$ satisfying condition A.5 (Perfect match) which guarantees that the optimum score is given to a candidate clustering if and only if it is identical to the gold standard.
\end{proof}

Summing up, Theorems~\ref{th:PNIA} to~\ref{th:multilple_cluster} show that MOCICE-BCubed F$_1$ satisfies conditions B.1 to B.5.

\section{Conclusions}
\label{sec:conclusion}

In this paper we have presented MOCICE-BCubed F$_1$, a new external evaluation measure for biclustering algorithms. This measure is an adaptation, based on the micro-objects transformation, of CICE-BCubed F$_1$, which had been previously shown to satisfy the most comprehensive set of meta-evaluation conditions for the traditional clustering task. We show that the new measure is equivalent to CICE-BCubed F$_1$ when evaluating traditional clustering, viewed as a particular case of biclustering, thus inheriting the compliance to the most comprehensive set of meta-evaluation conditions for this task. This behaviour sets MOCICE-BCubed F$_1$ apart from previously proposed micro-object-based measures, for which we provide counterexamples showing lack of compliance with several of these conditions. Moreover, we show that MOCICE-BCubed F$_1$ also satisfies the most comprehensive set of meta-evaluation conditions specific to the biclustering task.

Our main direction for future work has to do with the practical difficulty of hand-annotating gold standard collections for biclustering evaluation with both the object clusters and the associated feature subspaces. To that end, we intend to propose measures capable of using traditional clustering gold standards, of which there is a much larger availability, to evaluate biclusterings.

\end{document}